\definecolor{shadecolor}{rgb}{0.68,0.6,0.16}
\definecolor{greylight}{rgb}{0.7,0.7,0.7}
\definecolor{greyblue}{rgb}{0.0,0.0,0.5}
\definecolor{greyred}{rgb}{0.5,0.0,0.0}
\definecolor{lightgreyblue}{rgb}{0.5,0.5,0.9}
\definecolor{lightgreyred}{rgb}{0.6,0.2,0.2}
\definecolor{lightgreyorange}{rgb}{0.7,0.5,0.2}
\newlength\savedintextsep 
\title{Modular Verification of Vehicle Platooning with Respect to
  Decisions, Space and Time\thanks{Work supported EPSRC
    grants EP/N007565 (Science of Sensor Systems Software), EP/R026092
    (FAIR-SPACE RAI Hub) and EP/L024845/1 (Verifiable Autonomy).}}
\author{Maryam Kamali \and Sven Linker \and Michael Fisher}
\institute{University of Liverpool, UK\\
\email{\{maryam.kamali,s.linker,mfisher\}@liverpool.ac.uk}
}
\begin{document}

\maketitle
\begin{abstract}
The spread of autonomous systems into safety-critical areas has
increased the demand for their formal verification, not only due to
stronger certification requirements but also to public uncertainty
over these new technologies. However, the complex nature of such
systems, for example, the intricate combination of discrete and
continuous aspects, ensures that whole system verification is often
infeasible. This motivates the need for novel analysis approaches that
modularise the problem, allowing us to restrict our analysis to one
particular aspect of the system while abstracting away from
others. For instance, while verifying the real-time properties of an
autonomous system we might hide the details of the internal
decision-making components. In this paper we describe verification of
a range of properties across distinct dimesnions on a practical hybrid
agent architecture. This allows us to verify the autonomous
decision-making, real-time aspects, and spatial aspects of an
autonomous vehicle platooning system. This modular approach also
illustrates how both algorithmic and deductive verification techniques
can be applied for the analysis of different system subcomponents.

\end{abstract}
\begin{keywords}
Modular Verification \textperiodcentered{}
Hybrid Agent Architecture \textperiodcentered{}
Automata \textperiodcentered{}
Spatial Reasoning \textperiodcentered{}
BDI Agent Programming 
\end{keywords}

\section{Introduction}
\label{sec:intro}
\vspace*{-1em}

\noindent Autonomous systems are increasingly being introduced into
safety-critical areas, for example nuclear waste
management~\cite{IntSys:Nuclear:2018}, domestic
robotics~\cite{BeerEtAl2012}, or transportation, in the form of
unmanned aircraft, advanced driver assistance systems, and even
``driverless'' cars.  Although autonomous cars are generally aimed at
increasing the overall safety of traffic, vehicle
\emph{platooning}~\cite{Hsu1994,SolyomC13}, shown in
Fig.~\ref{fig:platooning} in particular provides even more advantages
over single vehicles: it potentially decreases both congestion on
motorways, and fuel consumption, since the relative braking distance
between vehicles should be smaller, and hence the vehicles can make
use of slipstreams with reduced wind resistance. Here, vehicles are
held in sequence on a highway, with distances and speeds controlled by
the platoon rather than the individual vehicle.  Platooning has been
recognised as a valuable means to achieve these goals, and is
encouraged politically, for instance, by the Department of Transport
of the United
Kingdom\footnote{\url{https://trl.co.uk/news/news/government-gives-green-light-first-operational-vehicle-platooning-trial}}.

Autonomous vehicles within a platoon need to be verified to ensure the
overall safety of the platoon. Specifically both autonomous
decision-making concerning leaving/joining the platoon, and low-level
interaction with its environment have to be analysed, in the best case
by providing guarantees for reliable behaviour. To certify the
high-level decisions of an individual autonomous system, the
\emph{rational agent} concept~\cite{Wooldridge2000} is widely used,
since it allows for an analysis of the \emph{reasons} why an
autonomous system chooses a certain action.

The physical interaction of a vehicle with the rest of the platoon of
vehicles in its environment consists of several different
dimensions. Two of the most important are \emph{time} and
\emph{space}. Timing constraints are of major importance to the
overall behaviour of a system. For example, if an unsafe situation is
encountered, the vehicles have to react within a certain time frame to
ensure safety during emergencies. But even for normal vehicle behaviour,
such as joining or leaving a platoon, time constraints are
eminently important~\cite{Burns03}. Spatial aspects are vital for
similar reasons. Ensuring that vehicles do not get too close, or can
fit in the space they are trying to move in to, is clearly important.

So now we reach the key problem. A complex, autonomous system such as
an automotive vehicle platoon, will incorporate a diverse range of
properties and behaviours. If we wish to formally verify \emph{all} of
these dimensions together then we will certainly hit complexity issues
--- multi-dimensional formalisation easily become
\emph{very} complex~\cite{modal:logic:handbook,GKWZ03:book}.  Two
approaches are either to use modular verification
techniques~\cite{Konur2013} or to use abstraction
techniques~\cite{ClarkeGL94} to separate out dimensions of concern. 

\begin{figure}
  \begin{center}
    \begin{tikzpicture}[maneuver]
      \createlanes{2}
      \node[agent, minimum width=26mm] (E_reserve) at ($(begin lane 2) + (.36\linewidth ,0)$) {};
      \node[agent, minimum width=10mm] (E) at ($(begin lane 2) + (.36\linewidth -8mm,0)$) {{\(A\)}};

      \node[agent, opacity=0, minimum width=26mm] (E_inv) at ($(begin lane 1) + (.36\linewidth ,0)$) {};
      
      \draw[dotted] (E_reserve.east) -- (E_inv.east) -- (E_inv.south east) -- (E_inv.south west) 
      -- (E.south west);

      \node[agent, minimum width=15mm] (f1) at ($(begin lane 1) + (.05\linewidth,0)$) {};
      \node[agent, minimum width=10mm] (f1_ps) at ($(begin lane 1) + (.05\linewidth -2.5mm,0)$) {{\(F_4\)}};
      \node[agent, minimum width=15mm] (f2) at ($(begin lane 1) + (.18\linewidth,0)$) {};
      \node[agent, minimum width=10mm] (f2_ps) at ($(begin lane 1) + (.18\linewidth -2.5mm,0)$) {{\(F_3\)}};
      \node[agent, minimum width=15mm] (f3) at ($(begin lane 1) + (.54\linewidth,0)$) {};
      \node[agent, minimum width=10mm] (f3_ps) at ($(begin lane 1) + (.54\linewidth -2.5mm,0)$) {{\(F_2\)}};

      \node[agent, minimum width=15mm] (f3) at ($(begin lane 1) + (.67\linewidth,0)$) {};
      \node[agent, minimum width=10mm] (f3_ps) at ($(begin lane 1) + (.67\linewidth -2.5mm,0)$) {{\(F_1\)}};

      \node[agent, minimum width=20mm] (l) at ($(begin lane 1) + (.82\linewidth,0)$) {};
      \node[agent, minimum width=10mm] (l_ps) at ($(begin lane 1) + (.82\linewidth -5mm,0)$) {{\(L\)}};
    \end{tikzpicture}
    \caption{Vehicle Platooning --- vehicle $A$ joining interior of platoon}
    \label{fig:platooning}
  \end{center}
\end{figure}
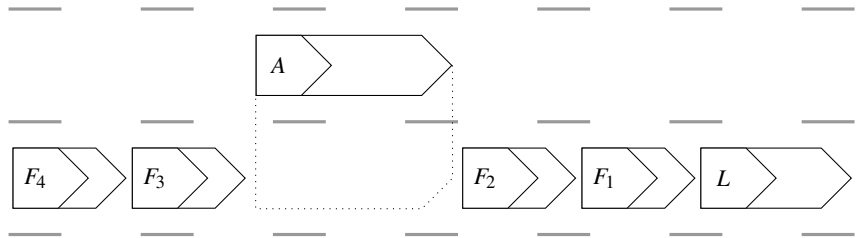

\vspace*{-2em}

\paragraph{Our Approach.} We have identified three key dimensions 
within autonomous vehicle platoons that we wish to assess: autonomous
decision-making, real-time properties, and spatial properties. We also
aim to minimise the change to existing components of the system when
new components are introduced. Consequently, we use abstraction
techniques for the three dimensions, but ensure that verification
results for parts of the system that are unchanged remain valid, and
so the verification task is reduced to checking any new system
components. We show the applicability of this approach by taking an
existing autonomous vehicle platoon system whose decision-making and
real-time properties have already been verified, in~\cite{Kamali2017},
and incorporating spatial aspects. A spatial controller is introduced to
model the lane-changing behaviour of the vehicles in the platoon. This
was something that the original platoon verification
from~\cite{Kamali2017} did not consider and we now show that not only
does the high-level decision making (agent) code remain unchanged, but
since the spatial aspects were shown to be correct
in~\cite{Hilscher2011}, the new verification task is reduced to the
analysis of the real-time requirements.

Consequently, we show how this modular verification approach supports
the flexibility of the underlying hybrid agent architecture, with any
new components of the extended architecture still being verifiable.
The verification of such architectures remains feasible as long as we
can apply appropriate abstraction to the system components.


\section{Hybrid Agent Architecture}
\label{sec:hybrid}
\vspace*{-1em}

 Cyber-physical systems, such as autonomous vehicles, require 
a sophisticated architecture together with corresponding 
formalism. Practical systems combine continuous environmental
interactions, through feedback control, together with discrete changes
between these control regimes. In traditional hybrid systems, 
separating the high-level decision
making from continuous control concerns is difficult. The 
other drawback of standard hybrid modelling approaches is that the
representation of decision-making can become very complex and
hard to distinguish. We utilise a \emph{hybrid agent architecture}~\cite{LVDFL10:ALCOSP} 
where the decision-making aspect is separated into a distinguished
`agent' while the system still provides for traditional feedback 
control systems. This approach to 
the modelling and development of autonomous systems provides a clear 
separation between these two concerns, and also 
the behaviour of each component is described in much more 
detail that can contribute to reason about their behaviours
separately. Thus, the separation of high-level decision making and 
low-level controllers within a
hybrid agent architecture provides an 
infrastructure for modular verification.

In this paper, we use a hybrid agent architecture, 
proposed for autonomous vehicle
platooning in~\cite{Kamali2017}, as shown in Fig.~\ref{fig:hybrid-arch}.
A \emph{Decision-Making Agent} instructs a \emph{Physical and 
Continuous Engine} by passing instructions through an 
\emph{Abstraction Agent}. The Abstraction Agent receives streams 
of continuous data from the Physical and Continuous Engines, extracts 
discrete information from this, and sends it to the decision-making 
agent. The Physical and Continuous engine manages the real-time 
continuous control of the vehicle through feedback controllers, 
implemented in MATLAB. We assumed that the dynamics of the vehicles are 
continuous, i.e., they may not arbitrarily change positions and velocities. 
An automotive simulator, TORCS
\footnote{The Open Racing Car Simulator
  \url{https://sourceforge.net/projects/torcs}}, is used to implement
the automotive environment and this environment is observed through
the sensory input by the Physical and Continuous engine.

\begin{figure}[htbp]
\begin{center}
\includegraphics[scale=0.4]{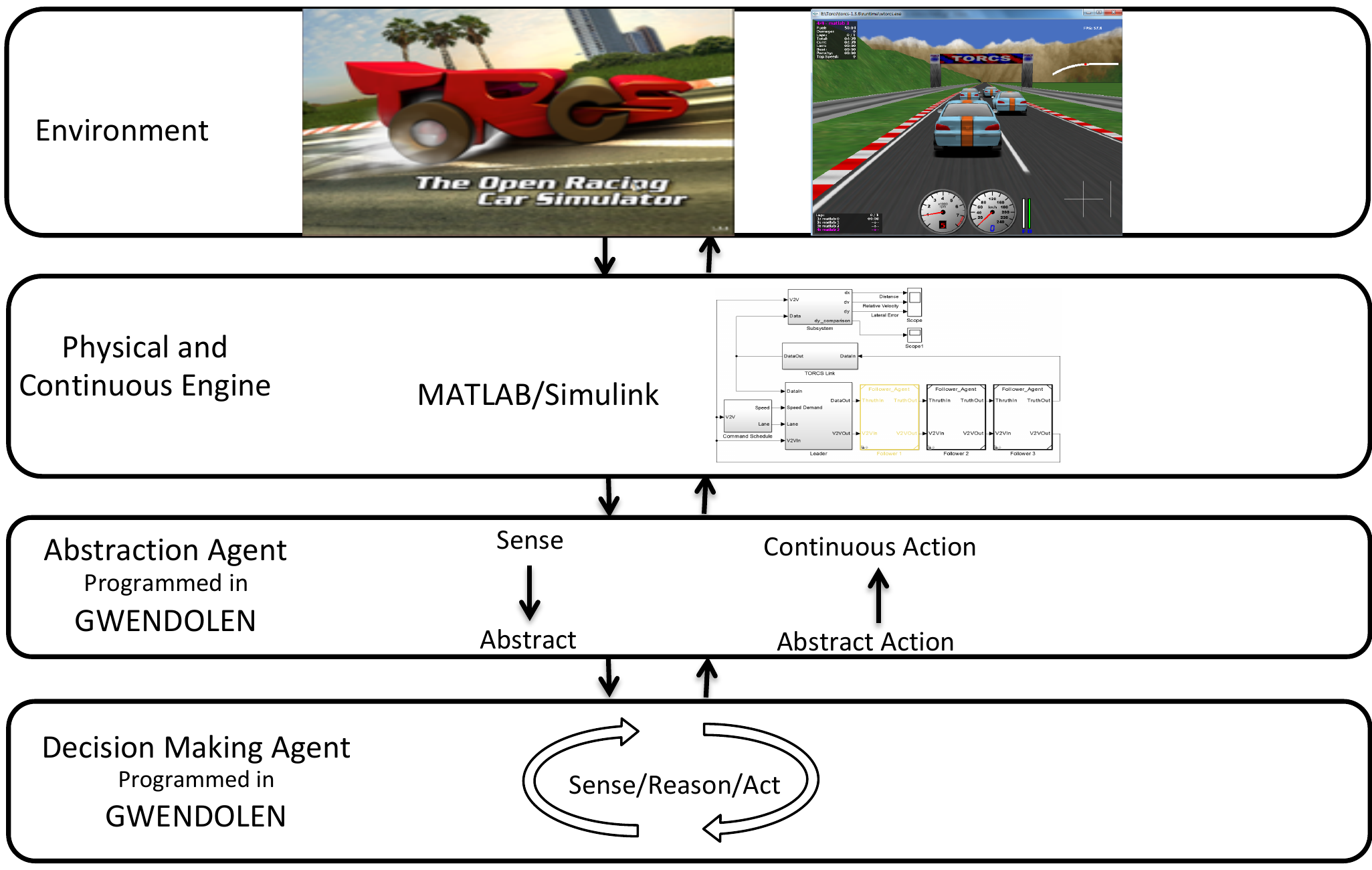}
\caption{Hybrid Agent Architecture~\cite{Kamali2017}}
\label{fig:hybrid-arch}
\end{center}
\end{figure}

The Decision-Making Agent is a \emph{rational agent} \cite{Wooldridge2000}
that not only makes
decisions, but will have explicit reasons for making these decisions.
This allows us to describe \emph{what} the autonomous system chooses
to do, and to reason about \emph{why} it makes its choices. Our Decision-Making
 Agent is based on the BDI (\emph{Belief-Desire-Intention}) paradigm. 
Here, \emph{beliefs} represent the agent's views about the world,
\emph{desires} provide the long-term objectives to be accomplished, and 
\emph{intentions} capture the set of goals currently being undertaken by the 
agent in order to achieve its desires. 

The separation between the Decision-Making Agent and the Physical and
Continuous Engine provides a way to verify the agent behaviour in
isolation from the detail of feedback control. In this work we utilise
\emph{program model-checking} over the Decision-Making Agent. This
allows us to formally verify the \emph{real} agent code rather than a
model of the agent behaviour. This formal verification of agent
behaviour is carried out using the AJPF model checker and the agent
itself is implemented in the verifiable language
\textsc{Gwendolen}~\cite{Dennis2008}. The model-checking approach using AJPF is
used to demonstrate that the BDI agent always behaves according to the
platoon requirements and never intentionally chooses unsafe
options. Unfortunately, model checking of BDI agents through AJPF is
not only resource-heavy, but also lacks support for the formal
verification of timed behaviours.  As indicated above, timing will be
a key principle of relevance to safety-critical behaviour and so, to
tackle this problem, Kamali et al.~\cite{Kamali2017} proposed a modular
approach to the verification of automotive platoons constructed in
this way. They used a combination of AJPF, for internal agent
decisions, and the Uppaal model checker, for global timing behaviours.

We here consider two of the main platooning procedures involved in
joining and leaving a platoon. Both the joining and leaving procedures
are comprised of a series of communications between an individual 
vehicle and the platoon leader aimed to obtain permission to join/leave 
or update the leader when the joining/leaving procedure is 
accomplished. Apart from the required communications, the vehicle switches 
between different controllers, such as moving from `manual' to `automatic'
for speed and steering. 
One of the challenging manoeuvres is changing lanes and
the high-level behaviour of the platoon is verified under the
assumption that the lane changing manoeuvre is carried out safely. In
order to accomplish the fully autonomous platooning while preserving
safety, we extend the previous work of~\cite{Kamali2017} by adding
spatial reasoning to the platooning architecture. Representing
\emph{space} allows us to model the spatial controller of the system
and consequently to verify the safety of the spatial controller
behaviours.

Both the idea, and the concrete definition of the spatial controller,
is taken from previous work~\cite{Hilscher2011}.  The level of this
spatial abstraction is still very high: we do not refer to
specific/metric distances, but instead associate regions of space with
different, abstract, properties.  That is, we distinguish two
different aspects of space needed by a vehicle: its \emph{reservation}
and its \emph{claim}. The intuition here is that the reservation of a
vehicle denotes the part of space that is \emph{necessary} for the
vehicle to operate safely. It comprises both the physical
extent of the vehicle and the distance it needs to come to a
standstill in case of an emergency. The claim, however, is not as
restrictive. It is an additional way for the vehicles to communicate,
similar to the turning signals common to road vehicles. That is, a
vehicle sets a claim somewhere on the motorway to indicate its desire
to occupy this part of the motorway in the (near) future. If the
vehicle decides that changing to the new lane is safe, it mutates its
existing claim into a reservation. Consequently, within our
abstraction the vehicle is considered to be on both lanes at once,
thus modelling the act of changing lanes. For example, in
Fig.~\ref{fig:platooning}, the car \(A\) currently set a claim on the
right lane, to join the platoon.


\section{Methodology}
\label{sec:meth}
\vspace*{-1em}

 In this section, we show how the hybrid agent architecture
of Sect.~\ref{sec:hybrid} can be instantiated to verify vehicle
platooning with respect to the agent's decisions, the continuous
behaviour, and the topological spatial changes necessary to change
lanes, e.g., while joining a platoon. To that end, we refine the
instantiation of previous work \cite{Kamali2017} with a new controller
responsible for the spatial aspects of traffic, which in turn is
inspired by previous work of one of the authors \cite{Hilscher2011}.
Generally, our system consists of several controllers, which constrain
the possible behaviour of the vehicles on the road. This implies, in
particular, that the behaviour of the parallel product of two
components is a subset of the behaviour of each single component.  To
show the correctness of our refinement step, we prove a set of proof
obligations including deadlock freedom and invariant preservation. We
also show that all the verified properties of autonomous vehicle
platooning presented in~\cite{Kamali2017}, hold after the refinement
step.

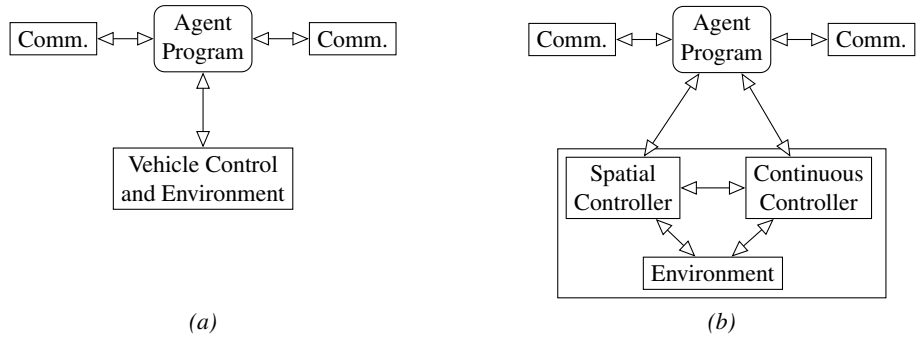
\begin{figure}
  \centering
\subfloat[]{  
\begin{tikzpicture}[align=center, >= triangle 45, fill=white]
  \node[draw,rounded corners ] (agent) {Agent\\ Program};
\node[draw,below =1cm of agent, rectangle] (cont-env) { Vehicle Control\\ and Environment};
\node[draw, right=.75cm of agent, rectangle] (comm2) {Comm.};
\node[draw, left=.75cm of agent, rectangle] (comm1) {Comm.};

\node[below =1cm of agent, rectangle, minimum height=2cm] (cont-env) {};

\draw[<->] (agent) to (comm1);
\draw[<->] (agent) to (comm2);
\draw[<->] (agent) to (cont-env);

\end{tikzpicture}
  \label{fig:orig-arch}
}
\hfill
\subfloat[]{  
\begin{tikzpicture}[align=center, >= triangle 45, fill=white,remember picture]
  \node[draw,rounded corners ] (agent) {Agent\\ Program};
\node[draw,below =1cm of agent, rectangle] (cont-env) {
  \begin{tikzpicture}[remember picture, fill=white]
\node[draw, rectangle] (env) {Environment};
\node[draw, rectangle, above right=.5cm and -.5cm of env] (cont) {Continuous\\ Controller};
\node[draw, rectangle, above left=.5cm and -.5cm of env] (spat) {Spatial\\ Controller};
\draw[<->] (env) to (cont);
\draw[<->] (env) to (spat);
\draw[<->] (cont) to (spat);
  \end{tikzpicture}
};
\node[draw, right=.75cm of agent, rectangle] (comm2) {Comm.};
\node[draw, left=.75cm of agent, rectangle] (comm1) {Comm.};

\draw[<->] (agent) to (comm1);
\draw[<->] (agent) to (comm2);
\draw[<->] (agent) to (cont);
\draw[<->] (agent) to (spat);

\end{tikzpicture}
  \label{fig:ref-arch}
}

  \caption{Original and Refined Architecture}
  \label{fig:arch}
\end{figure}

\noindent Fig.~\ref{fig:arch} shows both the original and refined architecture modelling 
an individual vehicle within a platoon.
The centre of the architecture consists of the \emph{agent program}, which makes autonomous decisions for the vehicle and
may both communicate with other agents via some 
\emph{communication channel}, and with both a \emph{continuous controller} and
an \emph{environment} (cf. Fig.~\ref{fig:orig-arch}). 
A main feature of our approach is a translation of the different components
into simpler abstractions for verification purposes. That is, to verify the
agent program, we can
abstract from the timing aspects of the continuous controller. Thus we  gain
a simple (finite-state) timed automaton as the abstraction of the continuous behaviour.
Similarly, we can reduce the agent program to the few parts necessary for
the communication with the continuous controller for the verification of
the latter. In both cases, the state space is reduced significantly, making
 verification feasible, in the case of the agent program by using
AJPF \cite{Dennis2012} and in the case of the continuous controller by 
using UPPAAL \cite{Behrmann2006}.

\subsection{Agent}
\label{sec:agent}
\vspace*{-.55em}

\noindent The BDI agent program in our architecture is written in
\textsc{Gwendolen} \cite{Dennis2008}, a prolog-style programming
language that incorporates explicit representation of goals, beliefs,
and plans. AJPF is a model checker that accepts \textsc{Gwendolen}
code as an input model.
It allows for the specification and verification of agent properties
with respect to beliefs and intentions. Since the general interface
between the underlying vehicle implementation and the agent is similar
to \cite{Kamali2017}, we could re-use that agent program with only
minor changes. We distinguish between two agent programs: the
\emph{leader}, which manages all joining and leaving requests of
vehicles within, or outside, the platoon, and the \emph{follower},
which defines the functionality of vehicles within the platoon.

We did not need to change the structure of the leader protocol, which is why
we subsequently concentrate on the follower. 
The follower currently implements the interactions for four main features:
\begin{enumerate}
\itemsep=0pt
\item joining a platoon;
\item leaving a platoon;
\item switching the steering control between manual and automatic; and 
\item setting a new distance to the front vehicle.
\end{enumerate}
A vehicle intending to join to a platoon initially sends a joining 
request to the leader and waits for confirmation from the leader.
When it receives the confirmation, it instructs the vehicle to 
change lane and waits for the vehicle to send back a successful confirmation
of changing lane. After receiving the successful confirmation the follower
switches its speed controller to automatic. When the joining vehicle 
is close enough to the proceeding follower within the platoon the agent
instructs the vehicle to switch the steering controller to automatic. Finally,
the joining vehicle confirms the successful joining procedure to the leader.
When the joining vehicle receives a reply back from the leader, it deduces
that the the joining goal has been achieved. The following code shows a 
simplified plan of the agent code for when the joining vehicle switches its 
speed controller from manual to automatic:

\begin{lstlisting}[language=Prolog, label=list: joining-follower, 
 captionpos= b, basicstyle=\scriptsize]

+! joining(X, Y): {B name(X), B join_agreement(X, Y), B changed_lane, 
	~B speed_contr, ~ B steering_contr, ~B joining_distance}
	<- +!speed_contr(1), *joining_distance;
			
\end{lstlisting}

Here: \lstinline{+! joining(X, Y)} indicates the addition of the goal to 
join to the platoon; clauses within $\{\ldots\}$ states the conditions
about the agent's beliefs which must be true such as a join agreement
belief for joining behind the follower $Y$ (\lstinline{B join_agreement}(X, Y)); 
and \lstinline{+! speed_contr(1), *joining_distance}, called the 'body' of 
a plan, is a set of deeds the agent performs for execution of the plan. 
\lstinline{+! speed_contr(1)} adds a new goal for switching the speed 
controller to automatic and \lstinline{*joining_distance} indicates that 
the execution of the plan is suspended until the joining vehicle reduces
the gap with its immediate follower.

Given agent code, one can specify the agent properties with respect to
beliefs, goals, and actions and then verify them using the AJPF model
checker.  As mentioned earlier, due to our modular verification
approach we could skip the re-verification of previous platooning
properties. An example of a safety property is as follows:

\vspace*{-1em}
\begin{small}
\begin{equation} \label{eq:prop4}
\begin{split}
&{\tt \Box\ \left(\ G_{\ X} \ \mathrm{joining}\ (X, Y)\ \& \right. } 
{\tt \left. \lnot B_{\ X} \ \mathrm{join\_agreement}\ (X, Y)\ \right) }\\
&{\tt \rightarrow}
{\tt \ \Box \ \lnot D_{\ X} \ \mathrm{perf}(speed_controller(1))}
\end{split}
\end{equation}
\end{small}
\vspace*{-1ex}

\noindent where $\mathrm{X}$ refers to a joining vehicle that has a goal to
join to a platoon, in front of a platoon follower $\mathrm{Y}$.
$G_{\ \mathrm{X}} \ \mathrm{joining\ (X, Y)}$ indicates a joining goal
that agent $\mathrm{X}$ tries to achieve. $B_{\ X} \ \mathrm{join\_ 
agreement\ (X, Y)}$ indicates the join agreement belief of agent 
$\mathrm{X}$, and $D_{\ X} \ \mathrm{perf(speed\_ controller(1))}$
indicates the action of $\mathrm{perf(speed\_ controller(1))}$ that
agent X performs. This property denotes that if a vehicle never 
believes it has received a confirmation from the leader, then it never 
switches to the automatic speed controller.

\subsection{Continuous Controller}
\label{sec:continuous}
\vspace*{-.55em}

\noindent In the original architecture, we combined the continuous
controller and the environment into one entity. For example, we did
not distinguish between interactions of the agent with the actuators
of the autonomous system and interactions with the human driver. In
both cases, the main feature of the interaction we were concerned with
was the time taken for the controller or environment to react.

As shown in Fig.~\ref{fig:ref-arch}, we now refine the continuous controller and 
the environment into three sub-components. We introduce two controllers,
one referring to the timing aspects and the continuous behaviour of the 
vehicle, and the other specifically to control actuations with respect to space. 

The refinement extends the previous environment with a model of potential collision, which
will be defined in the subsequent section.
It removes the nondeterministic failure of changing lane from the continuous 
controller that implicitly modelled the existence of such a potential collision. A part 
of the continuous controller automaton that has changed in our refinement step is 
shown in Fig.~\ref{fig:cont-aut}. Note that synchronisation channels are changed 
from \(\mathsf{changing\_lane}\) to \(\mathsf{phy\_changing\_lane}\) since the refined continuous 
controller is synchronised with the spatial controller, while in the previous 
controller it was synchronised with the agent automaton. We elaborate on the spatial
controller in the following section.

\begin{figure}
\begin{tabular}{| c | c |}
\hline \includegraphics[scale=0.38]{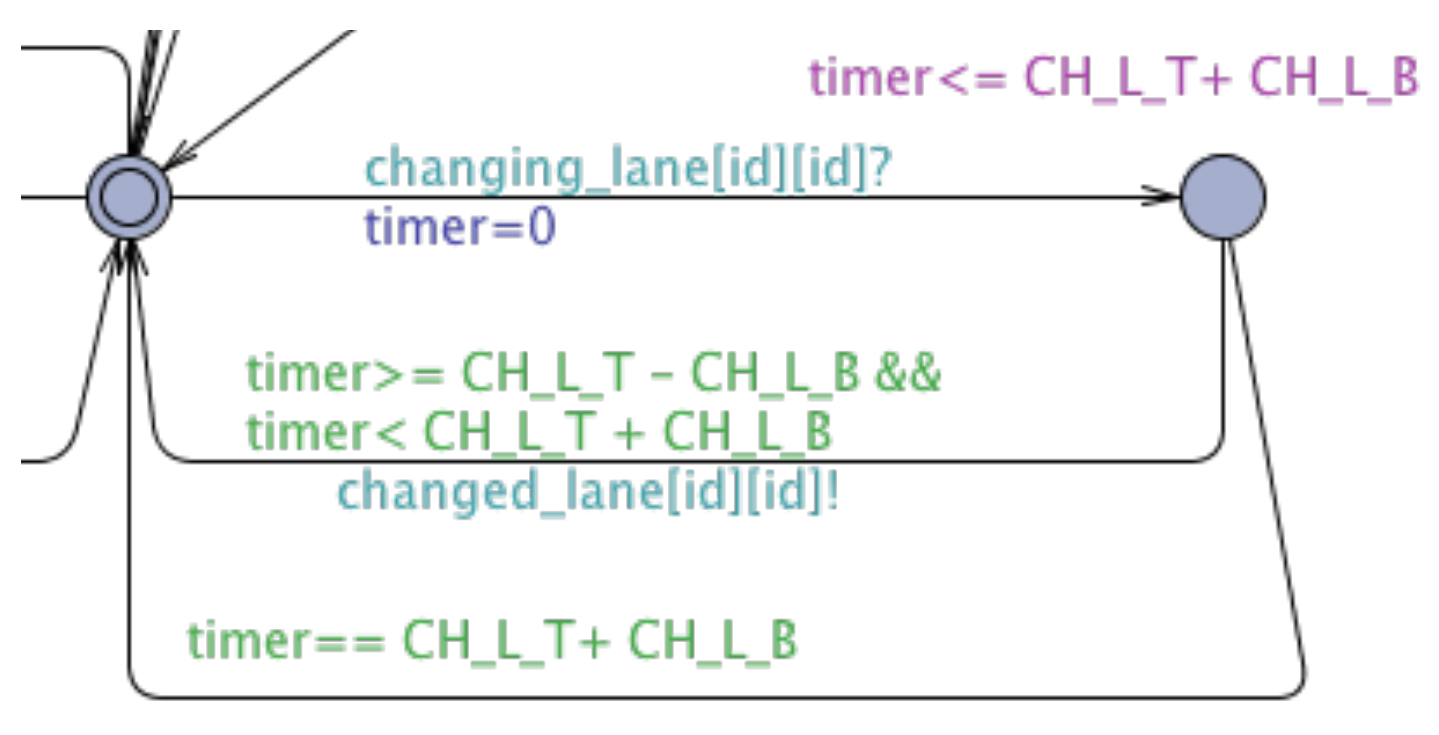}
&
\includegraphics[scale=0.4]{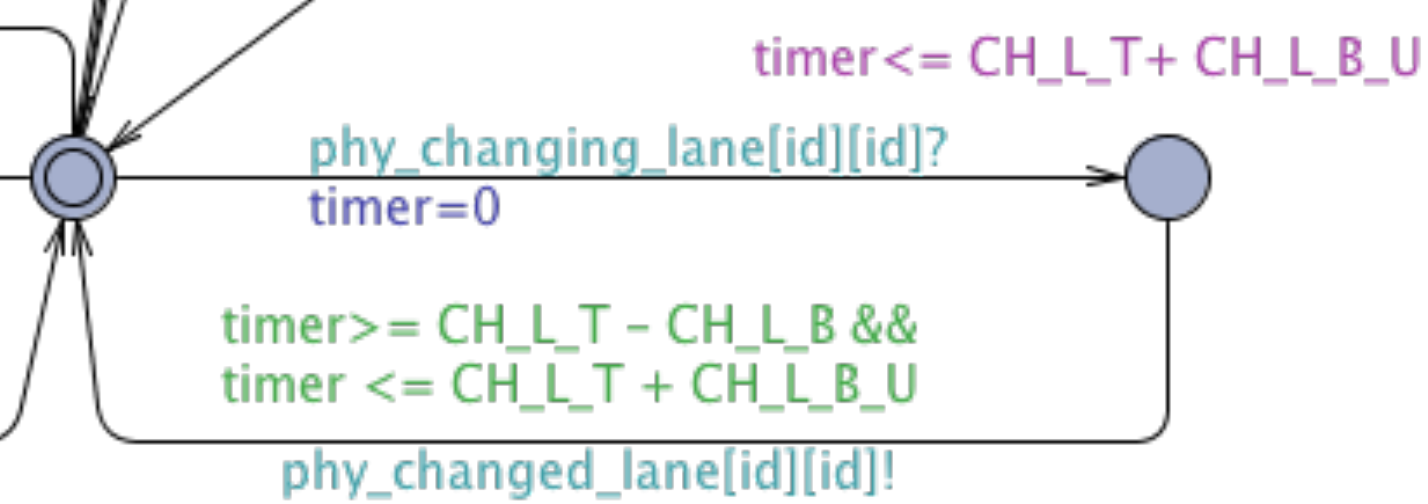}\\
\hline
\end{tabular}
\hspace*{20pt}
\caption{Abstract and Refined Continuous Controller Automata}
\label{fig:cont-aut}
\end{figure}

\subsection{Introducing Space}
\label{sec:space}
\vspace*{-.55em}

\noindent In this section, we present the concrete instantiation of 
the spatial controller, as well as the translation into timed and
untimed automata for verification purposes. To that end,
we formalise the ideas presented in Sect.~\ref{sec:hybrid} on
the spatial model. However, we will not go into all of
the details of the model of space, but refer to previous
work \cite{Hilscher2011,Linker2017}.
 
We 
fix a set of lanes \(\Lanes = \{1, \dots, n\}\) and for simplicity assume
the motorway to be infinitely long. The dimension in the direction of 
the motorway, called the \emph{extension}, is thus modelled by the real numbers
\(\R\). At any point in time, each vehicle \(c\) is then spatially characterised by its 
position \(\pos(c) \in \R\), its physical size \(\ps(c) \in \R\),
its braking distance, i.e., the distance it needs to come to a standstill \(\sd(c) \in \R\),
as well as the lanes it 
reserves \(\res(c) \subseteq \Lanes\) and claims \(\clm(c) \subseteq \Lanes\).
These sets of lanes are subject to certain conditions (e.g., the set of 
claims has to be a singleton and has to be adjacent to the current
reservation, etc.), which we will not expand upon
Each vehicle \(c\) can also perform certain \emph{actions}, in particular
\begin{itemize}
\item[]\begin{description}
\itemsep=0pt
\item[\(\tclaim{c}{n}\):] create a claim on lane \(n\) 
\item[\(\treserve{c}\):] change an existing claim into a reservation 
\item[\(\twdclaim{c}\):] remove/withdraw an existing claim 
\item[\(\twdreserve{c}{n}\):] shrink its reservation to only be on lane \(n\)  
\end{description}
\end{itemize}
While the original defintion of the spatial model allowed for arbitrarily many, even infinite, of
these instantaneous transitions at any point in time, we now restrict the possible transitions such that
after each transition an amount of time greater than zero has to pass. We add this constraint to enforce the permanency of spatial changes on the road.
Subsequently, we will refer to this model of space as \(R\).

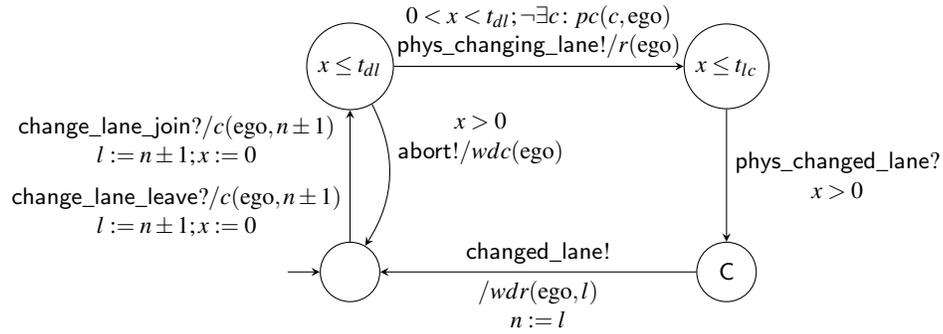
\begin{figure}
  \centering
\begin{tikzpicture}[every text node part/.style={align=center}, >=stealth, initial text ={}]
  \node[state, initial] (init) {};
  \node[state, above =1.75cm of init] (join1) {\(x \leq t_{dl}\)};
  \node[state, right=4.2cm of init] (wd1) {\textsf{C}};
  \node[state] (join2) at (join1 -| wd1) {\(x \leq t_{lc}\)};

  \draw[->] (init) to node[ left, xshift=0cm] {\(\mathsf{change\_lane\_join?}/\tclaim{\ego}{n\pm1}\)\\\(l :=n\pm 1; x:=0\)\\[.5em] \(\mathsf{change\_lane\_leave?}/\tclaim{\ego}{n\pm 1}\)\\\(l :=n \pm 1; x:=0\)} (join1);
  \draw[->, bend left = 30] (join1) to node[ right, yshift=.5cm] {\(x > 0 \)\\ \(\mathsf{abort!}/\twdclaim{\ego} \)} (init);
  \draw[->] (join1) to node[above] {\(0 < x < t_{dl}; \lnot \exists c \colon \pcc{c, \ego}\) \\ \(\mathsf{phys\_changing\_lane!}/ \treserve{\ego}\)} (join2);
  \draw[->] (join2) to node[right] {\(\mathsf{phys\_changed\_lane?}\)\\ \(x > 0\)} (wd1);
  \draw[->] (wd1) to node[above] {\(\mathsf{changed\_lane!}\)} node[below]{\(/\twdreserve{\ego}{l}\)\\\(n :=l\)} (init);
\end{tikzpicture}
  \caption{Spatial Controller for Joining and Leaving a Platoon (\(t_{dl} < t_{lc}\))}
  \label{fig:spat-cont}
\end{figure}

Using these abstract definitions as the semantics, we defined a
specification logic with an emphasis on multi-lane traffic
\cite{Hilscher2011}. However, in this work we will not require the
full logic, and hence we only explain the necessary details. One main
feature of the logic is that it employs \emph{local reasoning}, that
is, a formula is evaluated with respect to a finite part of the
motorway, as perceived by a distinguished vehicle, which is sometimes
referred to as the \emph{ego} vehicle.  This finite perception is
called the \emph{view} of a vehicle, and consists of a subset of the
lanes, as well as a finite interval of the real numbers, the extension
of the view.  We employ two spatial atoms \(\reserved{c}\) and
\(\claimed{c}\), which denote that the current view consists of a
single lane and a non-empty extension, and is fully occupied by the
reservation (claim, respectively) of \(c\). Furthermore, we use a
single modality \emph{somewhere} \(\somewhere{\varphi}\), which
denotes that the formula \(\varphi\) holds somewhere on the space
under consideration.  With these specific definitions, and standard
first-order operators, we can express the following two formulas.
\begin{align*}
cc & \equiv \lnot \exists c \colon c \neq \ego \land \somewhere{\reserved{\ego} \land \reserved{c}}\\
  \pcc{c, \ego} & \equiv c \neq \ego \land \somewhere{\claimed{ego} \land (\reserved{c} \lor \claimed{c})} 
\end{align*}
Formula \(cc\) denotes the existence of a vehicle \(c\) whose reservation
overlaps with the reservation of \(\ego\). According to our
explanations above, this would amount to an unsafe situation. For
simplicity, we term such situations as \emph{collisions}, even though
 \(c\) may only have encroached upon the braking distance of
\(\ego\) or vice versa.  Formula \(\pcc{c, \ego}\) denotes that the
claim of \(\ego\) overlaps with either the claim of \(c\) or its
reservation. This may result in an unsafe situation, if \(\ego\)
changed its claim into a reservation. Hence, \(\pcc{c, \ego}\) allows
us to identify potentially unsafe situations, and so take measures to
mitigate this.

To model the spatial behaviour of a vehicle joining or leaving the platoon, we
will use a type-amended timed automata called \emph{automotive-controlling timed automata} (ACTA) \cite{Hilscher2016}.
These augment timed automata with the possibility to use spatial formulas as guards and invariants, as 
well as to use the spatial actions described above at the transitions.  
Figure~\ref{fig:spat-cont} shows the controller in terms of an ACTA, where \(\ego\) refers
to the vehicle the controller is implemented in. The upper part of the controller is concerned with
the vehicle trying to join a platoon, while the lower part handles the leaving of a platoon. The actions
 \(\mathsf{change\_lane\_join}\), \(\mathsf{change\_lane\_leave}\), \(\mathsf{changed\_lane}\), 
and \(\mathsf{abort}\) are used to communicate with the decision making agent. The first two actions
are used by the agent to initiate the corresponding manoeuvre, while the spatial
controller uses \(\mathsf{changed\_lane}\) and \(\mathsf{abort}\) to indicate a successful and
unsuccessful lane-change manoeuvre, respectively. 
The channel \(\mathsf{phys\_changed\_lane}\) is a direct
communication link with the continuous controller, which indicates that steering
onto the new lane was successful. Observe that we chose to encode the necessary delays after
the transitions into this controller as well via the clock \(x\). 

For the verification of the other components, we need to provide abstractions from 
the ACTA given above into both an untimed automaton, and a standard
timed automaton. 
To abstract from both the timing and spatial definitions, we 
 remove all references to clocks, spatial formulas and
spatial actions, i.e., we only keep the discrete actions, and therefore maintain
 the order of actions.
In this way, we create a simple finite automaton which serves as the abstraction of the
spatial controller that can only be used during the verification of the agent programs.
The translation into timed automata is slightly more involved.
We employ a global set of identifiers for each vehicle. In fact, this set was already 
used to identify the different vehicles by parameterising the continuous controllers \cite{Kamali2017}. 
Hence, we replace each occurrence of \(\ego\) with the parameter \(id\). Furthermore, we introduce
a global array \(c\) of Boolean values, where the identifiers serve as the indices, and
each entry denotes whether the corresponding vehicle currently possesses a claim.
Whether a vehicle is currently engaged in a lane-change manoeuvre, i.e., whether
it uses two lanes at once for its reservation, is indicated by a variable \(r\), which is
local to each controller. 

However, since claims and reservations are strongly tied together,
we also need to define an abstraction of the road's behaviour. To that end, we
chose to use a very simple abstraction: a potential collision can only happen,
if at least one vehicle currently holds a claim. Furthermore, a potential collision
has to last an arbitrary amount of time greater than zero before it can be resolved. 
This is a result of the assumption on the vehicles dynamics to be continuous and the
necessary delays after the spatial transitions. Note that
a potential collision can happen due to two reasons: either a spatial transition or
the different velocities of two cars. In both cases, our model and its assumptions
ensure that the situation persists for a non-zero amount of time. 
We can formalise these two properties with the following abstraction of the road's behaviour,
as shown in Fig.~\ref{fig:abs-road}. In this figure, \(y\) is a clock used to enforce
the timing behaviour.

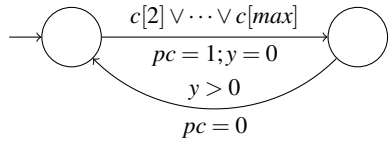
\begin{figure}
  \centering
  \begin{tikzpicture}[initial text={}]
    \node[state, initial] (init) {};
    \node[state, right=3cm of init] (pc) {};

    \draw[->] (init) to node[above] {\(c[2] \lor \cdots \lor c[max]\)} node[below] {\(pc = 1; y = 0\)} (pc);
    \draw[->, bend left=45] (pc) to node[above] {\(y > 0\)} node[below] {\(pc = 0\)} (init);
  \end{tikzpicture}
  \caption{Abstraction of Spatial Behaviour on the Road}
  \label{fig:abs-road}
\end{figure}

With these changes, the timed abstraction of the spatial controller is as 
shown in Fig.~\ref{fig:timed-abs-spat-cont}.
The timing behaviour is exactly as in
the original automaton. Hence, if we can verify the other controllers in the presence
of this controller, we can guarantee the safety of the overall system. 
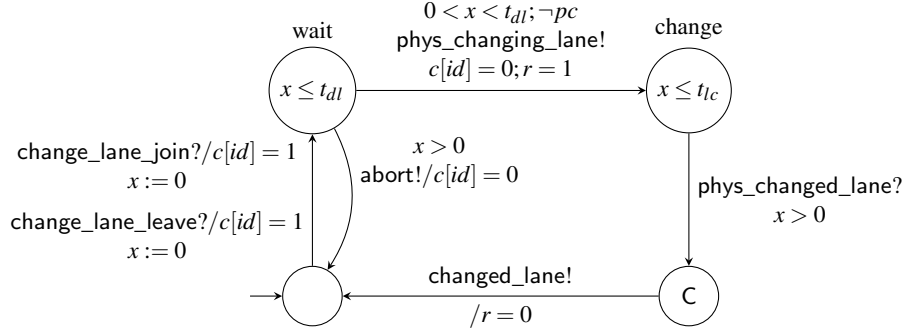
\begin{figure}
  \centering
\begin{tikzpicture}[every text node part/.style={align=center}, >=stealth, initial text ={}]
  \node[state, initial] (init) {};
  \node[state, above =1.75cm of init, label={above:wait}] (join1) {\(x \leq t_{dl}\)};
  \node[state, right=4.2cm of init] (wd1) {\textsf{C}};
  \node[state, label={above:change}] (join2) at (join1 -| wd1) {\(x \leq t_{lc}\)};

  \draw[->] (init) to node[ left, xshift=0cm] {\(\mathsf{change\_lane\_join?}/c[id] = 1\)\\\(x:=0\)\\[.5em] \(\mathsf{change\_lane\_leave?}/c[id]=1\)\\\( x:=0\)} (join1);
  \draw[->, bend left = 30] (join1) to node[ right, yshift=.5cm] {\(x > 0\)\\\(\mathsf{abort!}/c[id] = 0 \)} (init);
  \draw[->] (join1) to node[above] {\(0 < x < t_{dl}; \lnot pc\) \\ \(\mathsf{phys\_changing\_lane!}\)\\\( c[id]=0;r=1\)} (join2);
  \draw[->] (join2) to node[right] {\(\mathsf{phys\_changed\_lane?}\)\\ \(x>0\)} (wd1);
  \draw[->] (wd1) to node[above] {\(\mathsf{changed\_lane!}\)} node[below]{\(/r=0\)} (init);
\end{tikzpicture}
  \caption{Timed Abstraction of Spatial Controller of Fig.~\ref{fig:spat-cont} (\(t_{dl} < t_{lc}\))}
  \label{fig:timed-abs-spat-cont}
\end{figure}

Finally, we need to define how the agent program and the continuous
controller can be abstracted for the verification of the spatial
properties. To that end, observe that the specification logic for the
spatial properties does not contain modalities to refer to timings or
decisions of the agent. That is, for the spatial properties, we do not
refer to either time constraints or the specific goals or intentions
of the agent. Hence, for spatial verification, we use the untimed
automaton of the continuous controller which was also used during the
verification of the agent. Similarly, the untimed abstraction
automaton of the agent program used in the verification of timing
aspects can be re-used during the verification of spatial properties.

\begin{lemma}
  Let \(A_i\), \(V_i\) and \(S_i\) be the agent program, continuous controller and
spatial controller, respectively, of vehicle \(i\), with \(i \in \{1,2\}\). Furthermore
let \(\mathit{Comm}12\) be the component modelling the communication of vehicle \(1\) and \(2\).
Let \(S_i^\prime\) be the abstractions of the spatial controllers (Fig.~\ref{fig:timed-abs-spat-cont}), 
\(A_i^\prime\) the abstractions of the agent programs, 
\(R^\prime\) the abstraction of the road (Fig.~\ref{fig:abs-road}), and \(\varphi_t\) a formula describing a time property.
If 
\(A_1^\prime \| V_1 \| S_1^\prime \| \mathit{Comms}12 \| A_2^\prime \| V_2 \| S_2^\prime \| R^\prime \models \varphi_t \)
then
 \(A_1 \| V_1 \| S_1 \| \mathit{Comms}12 \| A_2 \| V_2 \| S_2 \| R\models \varphi_t\).  
\end{lemma}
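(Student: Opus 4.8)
The plan is to prove this as a soundness-of-abstraction result: I would show that the concrete composite system is \emph{refined by} (is simulated by) the abstract one with respect to the observables on which $\varphi_t$ depends, and then transfer the property across that refinement. Concretely, I read each component as a (timed) transition system and the whole system as their synchronised parallel product, and I take $\varphi_t$ to hold of a system iff \emph{all} of its timed runs satisfy it. Since $\varphi_t$ is a \emph{time} property, it refers only to clocks and synchronisation channels, not to the spatial atoms $\reserved{c}$, $\claimed{\ego}$ or to the agent's goals and beliefs. It therefore suffices to exhibit a timed simulation $\preceq$ with the concrete system $\preceq$ the abstract system that preserves exactly these observables: every observable run of the concrete system is then a run of the abstract system, so if $\varphi_t$ holds on all abstract runs it holds on all concrete runs.

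I would reduce this global simulation to a component-wise one. Since simulation on timed automata is a precongruence for $\|$, it is enough to simulate each concrete component by its abstract counterpart and then compose. The components $V_i$ and $\mathit{Comms}12$ are literally unchanged, giving the identity simulation. For the agent, $A_i \preceq A_i'$ holds because the untimed abstraction $A_i'$ retains exactly the order of the communication actions while hiding internal reasoning, so every externally visible behaviour of $A_i$ is reproduced by $A_i'$; this over-approximation is the one already used in \cite{Kamali2017}. The only genuinely new obligation is $S_i \mathbin{\|} R \preceq S_i' \mathbin{\|} R'$, and this is where the real work lies.

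For the crux I would carry along each concrete run the invariant that the Boolean flag $c[id]$ equals $1$ precisely when $\ego$ currently holds a claim in $R$, that $r$ records whether $\ego$ occupies two lanes, and that the clock $x$ and the control location agree with their abstract copies (the abstract controller is built so that the timing behaviour is \emph{exactly} that of Fig.~\ref{fig:spat-cont}, so this is a bijection on locations and on $x$). The spatial actions $\tclaim{\ego}{n\pm1}$, $\treserve{\ego}$, $\twdclaim{\ego}$, $\twdreserve{\ego}{l}$ update the concrete claim/reservation in lockstep with the Boolean updates $c[id]{=}1$, $c[id]{=}0$, $r{=}1$, $r{=}0$, so the invariant is preserved across transitions. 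The decisive point is that the abstract road $R'$ over-approximates every potential collision of $R$: by definition $\pcc{c,\ego}$ entails $\somewhere{\claimed{\ego}}$, so a concrete potential collision can occur only while $\ego$ holds a claim, which is exactly the guard $c[2]\lor\cdots\lor c[max]$ enabling $R'$ to set $pc{=}1$; and by continuity of the vehicle dynamics together with the enforced positive delay between spatial transitions, any such potential collision persists for a non-zero duration, matching the $y>0$ constraint in $R'$. Hence, whenever the concrete guard $\lnot\exists c\colon\pcc{c,\ego}$ on the $\mathsf{wait}\to\mathsf{change}$ transition is enabled, the abstract guard $\lnot pc$ can be made simultaneously enabled, so $S_i'\mathbin{\|}R'$ matches the move.

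The main obstacle is precisely this last step: establishing that $R'$ is a faithful over-approximation of $R$ in both respects -- that no potential collision arises without some vehicle holding a claim, and that every one lasts strictly more than zero time -- given that $R'$ discards all geometric information ($\pos$, $\ps$, $\sd$ and the lane sets $\res(c)$, $\clm(c)$) and replaces it by the single Boolean $pc$ gated by the claim flags and the clock $y$. Once this simulation is in place, the precongruence property composes it with the identity and agent simulations to give the concrete system $\preceq$ the abstract system; and since this simulation preserves the clocks and synchronisation channels referenced by $\varphi_t$, the conclusion $A_1 \| V_1 \| S_1 \| \mathit{Comms}12 \| A_2 \| V_2 \| S_2 \| R \models \varphi_t$ follows from the hypothesis on the abstract system.
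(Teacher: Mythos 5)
Your proposal is correct and follows essentially the same route as the paper's own sketch: both transfer \(\varphi_t\) from the abstract to the concrete composition by arguing that each abstract component over\-/approximates its concrete counterpart while preserving the timing constraints exactly, with the decisive step being that the abstract road signals a potential collision only when some vehicle holds a claim and that such situations persist for a non\-/zero duration. Your write\-/up merely makes the paper's informal trace\-/inclusion argument precise as a component\-/wise timed simulation composed via precongruence for \(\|\), and you rightly flag the faithfulness of \(R^\prime\) with respect to \(R\) as the point the paper settles only by appeal to its modelling assumptions (continuous dynamics and enforced delays after spatial transitions).
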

\begin{proof}[Sketch]
  The timing behaviour of \(A_i\) and  \(A_i^\prime\) is the same (cf. \cite{Kamali2017}). Furthermore,
  the timing constraints on \(S_i\) and \(S_i^\prime\) are also the same. Now, after each spatial transition,
  in the original \(S_i\), some time has to pass. In both \(S_i\) and its abstraction \(S_i^\prime\), every time
  a clock is reset
the guards on the outgoing transitions of the target state \(s\) require the automaton to stay in \(s\) for some
time.
  Finally, if the abstraction \(R\) flags a potential collision, then the original system possesses a 
  trace containing at least one claim for a vehicle. 
Let us assume this claim is of vehicle \(2\). Then, all possible traces starting from this configuration
are also possible in the abstraction \(R\).  Hence, whenever we can prove that the abstraction
satisfies a timed formula \(\varphi_t\), the original system also satisfies \(\varphi_t\).   
\qed
\end{proof}

\begin{lemma}
 Let \(A_i\), \(V_i\) and \(S_i\) be the agent program, continuous controller and
spatial controller, respectively, of vehicle \(i\), with \(i \in \{1,2\}\). Furthermore
let \(\mathit{Comm}12\) be the component modelling the communication of vehicle \(1\) and \(2\).
Now let \(A_i^\prime\) and \(V_i^\prime\) be the abstractions without references to spatial
properties as described above.
Then, if 
\(A_1^\prime \| V_1^\prime \| S_1 \| \mathit{Comms}12 \| A_2^\prime \| V_2^\prime \| S_2 
\| R^\prime\models \varphi_s \)
then
 \(A_1 \| V_1 \| S_1 \| \mathit{Comms}12 \| A_2 \| V_2 \| S_2 \| R\models \varphi_s\).  
\end{lemma}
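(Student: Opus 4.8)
The plan is to run the same over-approximation argument as in the previous lemma, but now transferring a \emph{spatial} rather than a timing property, so the components that are abstracted are the agent and continuous controller instead of the spatial controller and road. The guiding principle, stated in Sect.~\ref{sec:meth}, is that parallel composition only constrains behaviour and that each primed component is an abstraction whose behaviour contains that of its concrete counterpart. I would therefore aim to show that the set of runs of the concrete network, projected onto the observations that matter for \(\varphi_s\), is contained in the corresponding set of runs of the abstract network. Soundness of the implication then follows because \(\varphi_s\) is a universal (safety/invariant\=/style) property that holds along every run of the abstract system.

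First I would make precise what \(\varphi_s\) can observe. By construction, the spatial specification logic of Sect.~\ref{sec:space} is built only from the atoms \(\reserved{c}\) and \(\claimed{c}\), the modality \(\somewhere{\cdot}\), and first-order operators; it contains no clock constraints and no belief/goal/intention modalities. Hence the truth value of \(\varphi_s\) in a configuration depends solely on the spatial state, i.e.\ on the families \(\res(c)\) and \(\clm(c)\), and these are modified exclusively by the spatial actions \(\tclaim{c}{n}\), \(\treserve{c}\), \(\twdclaim{c}\) and \(\twdreserve{c}{n}\) issued by the spatial controllers \(S_i\). Since the \(S_i\) are kept concrete and identical on both sides of the implication, the sequences of spatial actions, and therefore the reachable spatial configurations, are generated by exactly the same automata in both systems.

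Next I would establish the containment of runs. For the agent and continuous controller it suffices to observe that \(A_i^\prime\) and \(V_i^\prime\) are precisely the untimed abstractions already shown sound for the companion verification tasks (the agent abstraction is the one used in \cite{Kamali2017}, and \(V_i^\prime\) is the untimed continuous-controller automaton used during agent verification). Each such abstraction retains the discrete synchronisation channels through which the agent and continuous controller drive \(S_i\) --- \(\mathsf{change\_lane\_join}\), \(\mathsf{change\_lane\_leave}\), \(\mathsf{changed\_lane}\), \(\mathsf{abort}\) and \(\mathsf{phys\_changed\_lane}\) --- while dropping only the timing and internal-decision detail that \(\varphi_s\) cannot observe. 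Consequently every handshake that could enable a spatial action in the concrete system is still enabled in the abstract one, so every concrete sequence of spatial actions has a matching abstract run. For the road, I would reuse the final step of the previous lemma's argument: whenever the concrete road \(R\) permits a spatial configuration, the abstract road \(R^\prime\) (Fig.~\ref{fig:abs-road}) admits a corresponding run in which the relevant claim bit \(c[\cdot]\) is set, so that all concrete traces from such a configuration remain possible in \(R^\prime\).

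The main obstacle will be the road component, because the concrete \(R\) computes the full spatial predicates --- in particular the potential-collision formula \(\pcc{c,\ego}\) on which the guard of \(S_i\) depends --- whereas \(R^\prime\) only tracks, via the Boolean array \(c\), whether some vehicle holds a claim. I would have to argue that this coarsening never removes a spatially reachable configuration: that the guard \(\lnot\exists c\colon\pcc{c,\ego}\) of the concrete controller is never spuriously disabled relative to the abstract condition \(\lnot pc\), and that the non-zero delay enforced after each spatial transition (which makes potential collisions persist, as discussed for the model \(R\)) is respected identically on both sides. Once this simulation from the concrete to the abstract spatial behaviour is in place, the fact that \(\varphi_s\) holds on every abstract run immediately yields that it holds on every concrete run, completing the proof.
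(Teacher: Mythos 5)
Your proposal follows essentially the same route as the paper's (two-sentence) proof sketch: the abstractions \(A_i^\prime\) and \(V_i^\prime\) allow strictly more behaviour than the concrete components, and \(\varphi_s\) cannot observe the timing or agent-internal detail that is dropped, so satisfaction transfers from the abstract to the concrete network. Your additional discussion of the road component (\(R\) versus \(R^\prime\)) addresses a point the paper's sketch silently glosses over, so flagging it as the remaining obstacle is a reasonable and arguably more careful account of where the real work lies.
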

\begin{proof}[Sketch]
  Again, this holds since the abstractions \(A_i^\prime\) and \(V_i^\prime\) allow for more
behaviour than the original automata. Furthermore, the spatial properties may neither
refer to the internals of the agent program, nor to time aspects of the system. \qed
\end{proof}


\section{Verification of Vehicle Platooning}
\label{sec:verification}
\vspace*{-1em}
 In this section, we explain the verification approach built
on the methodology presented in Sect.~\ref{sec:meth}.\footnote{The model and
the verified properties 
can be found at \url{https://github.com/VerifiableAutonomy/AgentPlatooning}} On one hand, we
did not have to re-run most of our verification methods from our
previous work, particularly running AJPF, since we only refined
non-agent parts of the system. On the other hand, we needed to show
that the refinement step was valid by proving proof obligations. In
the following, we first identify a set of proof obligations that we
proved to verify our refinement step. We then denote the spatial
properties that we checked for our concrete vehicle platooning. We
also point out those parts of the system that remained unchanged and
consequently not re-verified. Finally, we prove that the spatial
controller which is added to our concrete vehicle platooning is a safe
fragment of the space model in~\cite{Hilscher2011}.

\subsection{Proof Obligations}
\label{sec:po}

\noindent The refinement step allows us to introduce more details
about the spatial properties of vehicle platooning. However, we need
to ensure that the new details do not violate the system invariants,
and do not introduce deadlocks. The checking needs to be considered for
both verification of agent and timing behaviours.  The untimed
abstraction of the spatial controller only allows the same set of
sequences of interactions with the agent. This means that we did not
change the \emph{structure} of the agent programs themselves; neither
the leader nor the follower, i.e., the refinement step is correct
wrt. agent behaviour.  To discuss the correctness of our refinement
step wrt. temporal behaviour, we check four main proof obligations,
shown in Fig.~\ref{table:po}. The first three obligations are verified
using the Uppaal model checker, followed by a discussion of the
correctness of the fourth obligation.  We instantiated the agent timed
automata, spatial, and continuous controllers for a platoon of four
vehicles and one leader. We choose an arbitrary vehicle, for example
vehicle \(2\), to denote our proof obligations and properties of
interest, and described these with respect to this vehicle. Note that
\(a2\) is the follower agent program as implemented in vehicle \(2\)
and \(s2\) is the lane-change (spatial) controller of the same
vehicle.

 \begin{figure}\begin{center}\begin{small}
\begin{tabular}{| c | l | c |}
\hline Deadlock Freedom & $A\Box \  not \ deadlock $\\
\hline Possible to join and leave & $E\lozenge \ a2.join\_completed  $\\
& $E\lozenge \ a2.leave\_completed   $\\
\hline Time bound for joining and leaving & $A\Box \ a2.join\_completed \ \mathbf{imply}$\\ 
& $(a2.process\_time>=50 \land a2.process\_time<90) $ \\ 
& $A\Box \ a2.leave\_completed \ \mathbf{imply}$\\ 
& $(a2.process\_time>=30 \land a2.process\_time<50) $ \\ 
\hline No new communication transaction & changes were restricted to continuous and spatial\\
& controllers \\
\hline
\end{tabular}
\end{small}
\caption[Caption for LOF]{Proof Obligations, with formalisation in timed temporal logic
  \footnotemark}
\label{table:po}
\end{center}
\end{figure}
\footnotetext{$A$=``all paths''; $E$= ``exist a path''; $\Box$=``Always''; $\lozenge$=``Eventually''}

The first proof obligation that we verified was deadlock freedom. We
showed that our refinement step was not too restrictive. The second
proof obligation ensures that adding the spatial controller does not
decrease the functionality of the platooning, and we checked whether
joining and leaving procedures can occur.  In the previous Uppaal
model of platooning, we assumed that change lane could happen in $20
\pm CH\_L\_B$ where $CH\_L\_B$ was reflecting the uncertainty of the
changing lane. In our refinement, the lower bound remains the same,
however, the upper bound splits to two waiting times for free space
$t_{dl}$ (cf. Fig.~\ref{fig:spat-cont}) and the uncertainty of the
changing lane ($CH\_L\_B - t_{dl}$). Therefore, we could show that the
time bound of joining and leaving remain the same as the previous
model (The third proof obligation in Fig.~\ref{table:po}).

In the refinement step,
 we defined two new channels  representing the communication between the spatial
 controller
and
 continuous controller, \(\mathsf{phy\_changing\_lane}\) and
\(\mathsf{phy\_changed\_lane}\). As these channels are not used in any other parts of
the system, we can guarantee that no new communication transition 
is added to
any other part of our model.

\subsection{Spatial Properties of Vehicle Platooning}
\vspace*{-.55em}
 We can verify that if a vehicle requested a
lane-change, i.e., the spatial controller reaches the \emph{wait}
state (cf. Fig.~\ref{fig:timed-abs-spat-cont}), and still perceives a
potential collision after the waiting time \(t_{dl}\), then the
corresponding manoeuvre in the agent program will fail. 
\begin{align}
(s2.wait \land pc \land s2.x == t_{ld})   \longrightarrow (a2.failed\_to\_join \lor a2.failed\_to\_leave)
\end{align}
In this formula \(\longrightarrow\) denotes the ``leads-to'' operator
of Uppaal.
 Observe that we cannot identify whether the join manoeuvre
or the leave manoeuvre failed, since the spatial controller acts
similarly for both manoeuvres. Note that identification of the
manoeuvre can be easily implemented by adding a flag to the spatial
controller automaton.  We can also show that, whenever the spatial
controller chooses that a lane-change can be safely initiated, it does
not perceive a potential collision on the road. Furthermore, as long
as it stays in this state, no potential collision can arise.
\begin{align}
  A\Box \lnot (s2.change \land pc)
\end{align}
 This
property shows that the space on the road as formalised in Fig.~\ref{fig:abs-road} 
is ``well-behaved'' within this abstraction,
since a potential collision can only happen, if a vehicle possesses a claim. However,
if the controller of vehicle \(2\) is in state \(change\), it already changed its claim
to a new reservation.  The time needed to verify these properties was similar to
the time needed for the proof obligations, which, compared to our
previous attempt, changed negligibly.

\subsection{Spatial Safety Property}
\label{sec:spat-safe}
\vspace*{-.55em}
 The main property that the spatial controller must ensure is that the space
used by two different vehicles is disjoint. That is, it has to ensure
that the formula \(cc\) as shown in Sect.~\ref{sec:space} is an invariant
of the system.
To that end, 
we re-use  a verification
result \cite{Linker2017} of a much more general controller specification encoded 
in the theorem prover Isabelle/HOL \cite{Nipkow2002}. 
Safety in this work means that \(\forall e \colon \mathsf{safe}(e)\) is a global invariant,
where \(\mathsf{safe}(e)\) is defined as follows.
\begin{align*}
 \mathsf{safe}(e) \equiv   \Box \lnot \exists c \colon c \neq e \land \somewhere{\reserved{e} \land \reserved{c}} 
\end{align*}
Observe that the modality \(\Box\) is similar in intent to the box-modality used in the 
previous sections, in that it quantifies over arbitrary transition sequences, but it does not
allow us to specify timing constraints.
To prove this property to be an invariant, we need two main assumptions:
\begin{enumerate}
\item All vehicles keep their distance to the vehicles in their front and back
\item All vehicles adhere to a certain protocol for changing lanes with respect to the platoon under consideration
\end{enumerate}
We do not elaborate on the first assumption. However,
the second assumption is that, the vehicle must not mutate its claim
into a reservation, in case of a potential collision during the phase
where a claim is held.
Formally, we have the following constraint, where
\(\Box_{\mathsf{r}(d)}\) quantifies over the transition where the
vehicle \(d\) changes its claim into a reservation and \(c\) ranges
only over the vehicles within the platoon.
\begin{align*}
  \mathsf{LC} &\equiv \forall d \colon  \exists c \colon \pcc{c,d} \implies \Box_{\mathsf{r}({d})} \bot
\end{align*}
 For simplicity, assume that the platoon under consideration
consists of two vehicles as in Sect.~\ref{sec:meth}.  That is, the
platoon \(P\) consists of the following components.
\begin{align*}
  P &\equiv A_1 \| V_1 \| S_1 \| \mathit{Comms}12 \| A_2 \| V_2 \| S_2 \| R
\end{align*}
Now, let \(\sem{S_1}\) and \(\sem{S_2}\) be the possible behaviours allowed by the 
controllers \(S_1\) and \(S_2\) as presented in Sect.~\ref{sec:space}. Since
the only transition to change a claim into a reservation is guarded by the 
potential collision check,
 we have for \(i \in \{1,2\}\),
\begin{align*}
 \sem{S_i} \cap \{tr \mid tr \models \exists c \colon  \pcc{c,i} \land \lozenge_{\mathsf{r}(i)} \top\} = \emptyset
\end{align*}
That is, since the behaviour of the parallel product of \(S_1\) and \(S_2\) is a subset of both \(\sem{S_1}\) and \(\sem{S_2}\), we also get 
\begin{align*}
  \sem{S_1 \| S_2} \cap  \{tr \mid tr \models \exists c \colon \pcc{c,i} \land \lozenge_{\mathsf{r}(i)} \top\} = \emptyset \enspace.
\end{align*}
Since the other controllers may only further restrict the possible behaviour of the
platoon, we also have 
\begin{align*}
  \sem{P} \cap  \{tr \mid tr \models \exists c \colon \pcc{c,i} \land \lozenge_{\mathsf{r}(i)} \top\} = \emptyset\enspace. 
\end{align*}
 Due to our assumption on the behaviour of all other vehicles, we can infer that \(\sem{P}\) does not 
contain any traces where other vehicles create a reservation during a potential collision.
Hence, we can strengthen this property even further.
 \begin{align*}
   \sem{P} \cap  \{tr \mid tr \models \exists c,d \colon \pcc{c,d} \land \lozenge_{\mathsf{r}(d)} \top\} = \emptyset \enspace,
 \end{align*}
which in particular implies
\(  \sem{P} \subseteq S\).
This yields
\(  \sem{P} \models \mathsf{LC}\),
which has been shown to ensure that \(P \models \forall e \colon \Box \mathsf{safe}(e)\).
Hence, our controller is a refinement
of the general case, which was shown to be safe. 


\section{Concluding Remarks}
\label{sec:discuss}
\vspace*{-1em}

\paragraph{Contribution.}
We presented a verification technique for autonomous systems
based on the use of a hybrid agent architecture.  The decomposition of
concerns inherent in this architecture allows us to define different
aspects of the system within different formalisms, which are tied
together by the communication structure of the system and its timing
constraints. For each of the formalisms we defined a translation into
an abstraction compatible with the other formalisms. In this way, we
can concentrate on each aspect in turn during verification, which both
reduces the state space, and allows us to use different techniques for
each aspect.

Decomposition techniques often isolate the single components and replace 
the interaction with the other components by assumptions, which are then shown to 
be guaranteed by the respective components \cite{Misra1981}.
In contrast, during each step of the verification, 
we keep the general structure of the overall system. That is, we do not really \emph{decompose}
the system, but abstract from different parts during each step. This eliminates the need
to infer the behaviour of the single components, e.g., in the form of guarantees. Of course,
this also means that large parts of the state space are retained during verification, in comparison
to techniques which replace the other components with the guarantees they keep. However, 
we have shown that our approach is both feasible for autonomous systems, as well as that
it scales well if new aspects are to be verified. 

\paragraph{Related Work} M\"{u}ller et al. presented a technique to verify safety of hybrid systems 
\cite{Muller2016} based on the identification of components. In their
approach, they need to define and verify contracts for the behaviour
of each component, which may simply assumed to be true during the
verification of other components.  In this manner, they can reduce the
verification task for each component. Their systems need to be defined
within a single formalism, differential dynamic logic
\cite{Platzer2010}, and are verified with the distinguished tool
KeymaeraX \cite{Fulton2015}. In contrast, we can rather easily
incorporate new formalisms into our approach, as evidenced by the
introduction of the lane-change controller and the necessary spatial
formalism. This is due to the minimised interaction between our
controllers, in the form of time and communication. In this way, we
may use the verification techniques suitable for the corresponding
subsystems, as long as we have a sensible abstraction and refinement
results for each system.

Abstraction and refinement techniques are often employed for verification purposes. 
For example, in counter-example guided abstraction (CEGAR) \cite{Clarke2000}, the 
verification starts with a very broad abstraction. If a counter-example is found to be 
spurious, i.e., it is not viable in the original system, the corresponding part of the 
state-space has to be refined to eliminate this example. That is, the system is 
analysed in a top-down fashion, from the broadest possible abstraction to an explicit 
description of the system. Our method proceeds in a somewhat orthogonal way. 
Instead of building an abstraction of the system as a whole, we build several 
abstractions according to the type of property we intend to verify.

The concept of abstraction and refinement relations is used prominently in  Event-B \cite{Abrial2010}.
Banach and Butler have used a hybrid extension of Event-B to model and verify controllers
used in autonomous driving \cite{Banach2013,Banach2014}. However, these controllers have
been verified on their own, and the interactions between them have not been verified.


\end{document}